\begin{document}
%
\title{Deeper Insights into Graph Convolutional Networks \\for Semi-Supervised Learning}

\author{ Qimai Li\textsuperscript{1}, Zhichao Han\textsuperscript{12}, Xiao-Ming Wu\textsuperscript{1}\thanks{Corresponding author.}\\
    \textsuperscript{1}{The Hong Kong Polytechnic University}\\
    \textsuperscript{2}{ETH Zurich}\\
    csqmli@comp.polyu.edu.hk,
    zhhan@student.ethz.ch,
    xiao-ming.wu@polyu.edu.hk
}

\maketitle
\begin{abstract}

Many interesting problems in machine learning are being revisited with new deep learning tools. For graph-based semi-supervised learning, a recent important development is graph convolutional networks (GCNs), which nicely integrate local vertex features and graph topology in the convolutional layers. Although the GCN model compares favorably with other state-of-the-art methods, its mechanisms are not clear and it still requires considerable amount of labeled data for validation and model selection.

\indent In this paper, we develop deeper insights into the GCN model and address its fundamental limits. First, we show that the graph convolution of the GCN model is actually a special form of Laplacian smoothing, which is the key reason why GCNs work, but it also brings potential concerns of over-smoothing with many convolutional layers. Second, to overcome the limits of the GCN model with shallow architectures, we propose both co-training and self-training approaches to train GCNs. Our approaches significantly improve GCNs in learning with very few labels, and exempt them from requiring additional labels for validation. Extensive experiments on benchmarks have verified our theory and proposals.

\end{abstract}
\section{Introduction}\label{sec:introduction}

\noindent The breakthroughs in deep learning have led to a paradigm shift in artificial intelligence and machine learning. On the one hand, numerous old problems have been revisited with deep neural networks and
huge progress has been made in many tasks previously seemed out of reach, such as machine translation and computer vision. On the other hand, new techniques such as geometric deep learning \cite{bronstein2017geometric} are being developed to generalize deep neural models to new or non-traditional domains.

It is well known that training a deep neural model typically requires a large amount of labeled data, which cannot be satisfied in many scenarios due to the high cost of labeling training data. To reduce the amount of data needed for training, a recent surge of research interest has focused on few-shot learning \cite{lake2015human,rezende2016one} -- to learn a classification model with very few examples from each class. Closely related to few-shot learning is semi-supervised learning, where a large amount of \emph{unlabeled} data can be utilized to train with typically a small amount of labeled data.



Many researches have shown that leveraging unlabeled data in training can improve learning accuracy significantly if used properly \cite{zhu2009introduction}. The key issue is to maximize the effective utilization of structural and feature information of unlabeled data. Due to the powerful feature extraction capability and recent success of deep neural networks, there have been some successful attempts to revisit semi-supervised learning with neural-network-based models, including ladder network \cite{rasmus2015semi}, semi-supervised embedding \cite{weston2012deep}, planetoid \cite{yang2016revisiting}, and graph convolutional networks \cite{kipf2016semi}.

The recently developed graph convolutional neural networks (GCNNs) \cite{defferrard2016convolutional} is a successful attempt of generalizing the powerful convolutional neural networks (CNNs) in dealing with Euclidean data to modeling graph-structured data. In their pilot work \cite{kipf2016semi}, Kipf and Welling proposed a simplified type of GCNNs, called graph convolutional networks (GCNs), and applied it to semi-supervised classification. The GCN model naturally integrates the connectivity patterns and feature attributes of graph-structured data, and outperforms many state-of-the-art methods significantly on some benchmarks. Nevertheless, it suffers from similar problems faced by other neural-network-based models. The working mechanisms of the GCN model for semi-supervised learning are not clear, and the training of GCNs still requires considerable amount of labeled data for parameter tuning and model selection, which defeats the purpose for semi-supervised learning.

In this paper, we demystify the GCN model for semi-supervised learning. In particular, we show that the graph convolution of the GCN model is simply a special form of Laplacian smoothing, which mixes the features of a vertex and its nearby neighbors. The smoothing operation makes the features of vertices in the same cluster similar, thus greatly easing the classification task, which is the key reason why GCNs work so well. However, it also brings potential concerns of over-smoothing. If a GCN is deep with many convolutional layers, the output features may be over-smoothed and vertices from different clusters may become indistinguishable. The mixing happens quickly on small datasets with only a few convolutional layers, as illustrated by Fig.~\ref{fig:karate}. Also, adding more layers to a GCN will make it much more difficult to train.

However, a shallow GCN model such as the two-layer GCN used in \cite{kipf2016semi} has its own limits. Besides that it requires many additional labels for validation, it also suffers from the localized nature of the convolutional filter. When only few labels are given, a shallow GCN cannot effectively propagate the labels to the entire data graph. As illustrated in Fig.~\ref{fig:gcn_validation}, the performance of GCNs drops quickly as the training size shrinks, even for the one with 500 additional labels for validation.

\begin{figure}[t!]
\centering
\includegraphics [width=8cm]{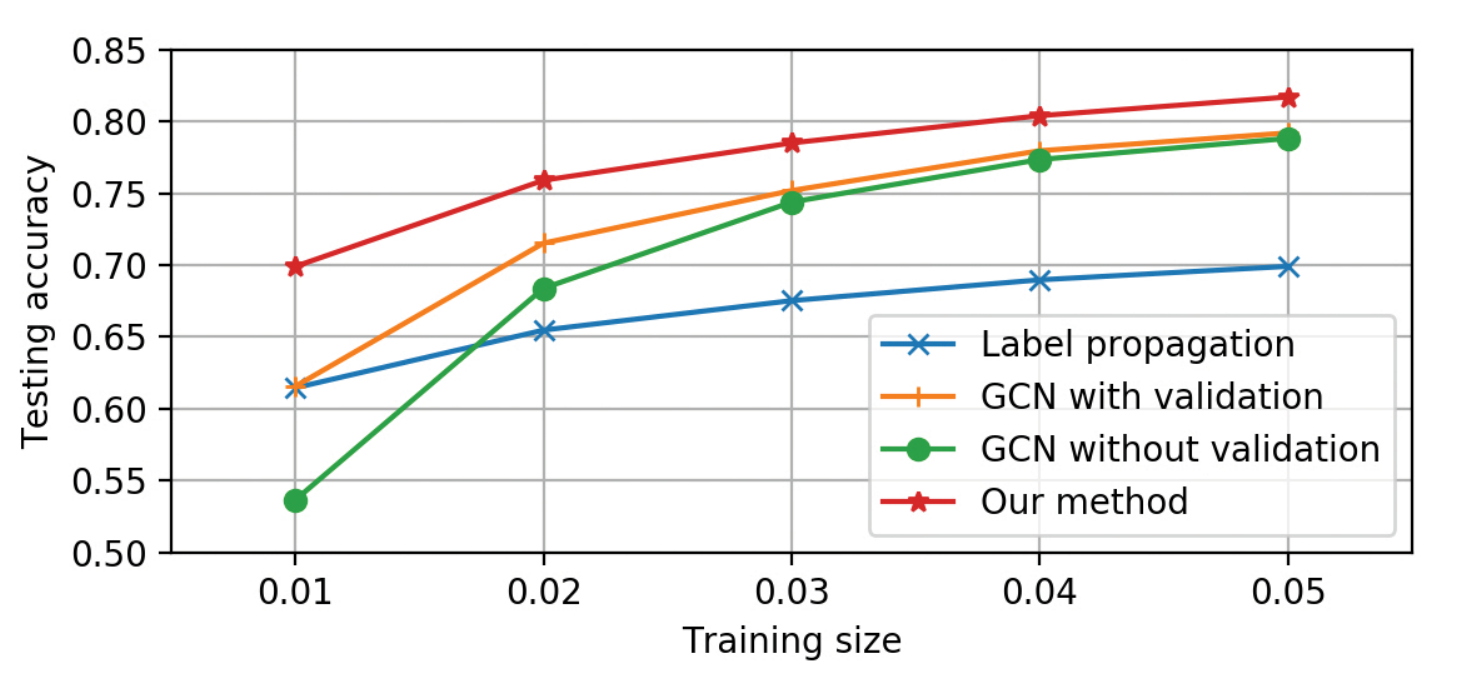}
\caption{Performance comparison of GCNs, label propagation, and our method for semi-supervised classification on the Cora citation network.}\label{fig:gcn_validation}
\end{figure}

To overcome the limits and realize the full potentials of the GCN model, we propose a co-training approach and a self-training approach to train GCNs. By co-training a GCN with a random walk model, the latter could complement the former in exploring global graph topology. By self-training a GCN, we can exploit its feature extraction capability to overcome its localized nature. Combining both the co-training and self-training approaches can substantially improve the GCN model for semi-supervised learning with very few labels, and exempt it from requiring additional labeled data for validation. As illustrated in Fig.~\ref{fig:gcn_validation}, our method outperforms GCNs by a large margin.



In a nutshell, the key innovations of this paper are: 1) providing new insights and analysis of the GCN model for semi-supervised learning; 2) proposing solutions to improve the GCN model for semi-supervised learning. The rest of the paper is organized as follows. Section~\ref{sec:GCN} introduces the preliminaries and related works. In Section~\ref{sec:Analysis}, we analyze the mechanisms and fundamental limits of the GCN model for semi-supervised learning. In Section~\ref{sec:Solution}, we propose our methods to improve the GCN model. In Section~\ref{sec:Experiment}, we conduct experiments to verify our analysis and proposals. Finally, Section~\ref{sec:conclusion} concludes the paper.

\section{Preliminaries and Related Works}\label{sec:GCN}

First, let us define some notations used throughout this paper. A graph is represented by $\mathcal{G}=(\mathcal{V},\mathcal{E})$, where $\mathcal{V}$ is the vertex set with $|\mathcal{V}| = n$ and $\mathcal{E}$ is the edge set. In this paper, we consider undirected graphs. Denote by $A=[a_{ij}]\in{\mathbb R}^{n\times n}$ the adjacency matrix which is nonnegative. Denote by $D=\mathrm{diag}(d_1,d_2,\ldots,d_n)$ the degree matrix of $A$ where $d_i=\sum_{j}a_{ij}$ is the degree of vertex $i$. The graph Laplacian \cite{Chung97} is defined as $L:=D-A$, and the two versions of normalized graph Laplacians are defined as $L_{\mathrm{sym}}:=D^{-\frac{1}{2}}LD^{-\frac{1}{2}}$ and $L_{\mathrm{rw}}: = D^{-1}L$ respectively.

\subsection{Graph-Based Semi-Supervised Learning}

The problem we consider in this paper is semi-supervised classification on graphs. Given a graph $\mathcal{G}=(\mathcal{V},\mathcal{E}, X)$, where $X=[\mathbf{x}_1,\mathbf{x}_2,\cdots,\mathbf{x}_n]^\top \in R^{n\times c}$ is the feature matrix, and $\mathbf{x}_i \in R^{c}$ is the $c$-dimensional feature vector of vertex $i$. Suppose that the labels of a set of vertices $\mathcal{V}_l$ are given, the goal is to predict the labels of the remaining vertices $\mathcal{V}_u$.

Graph-based semi-supervised learning has been a popular research area in the past two decades. By exploiting the graph or manifold structure of data, it is possible to learn with very few labels. Many graph-based semi-supervised learning methods make the cluster assumption \cite{chapelle2005semi}, which assumes that nearby vertices on a graph tend to share the same label. Researches along this line include min-cuts \cite{blum2001learning} and randomized min-cuts \cite{blum2004semi}, spectral graph transducer \cite{joachims2003transductive}, label propagation \cite{Zhu03} and its variants \cite{Zhou03,bengio2006label}, modified adsorption \cite{talukdar2009new}, and iterative classification algorithm \cite{sen2008collective}.


But the graph only represents the structural information of data. In many applications, data instances come with feature vectors containing information not present in the graph. For example, in a citation network, the citation links between documents describe their citation relations, while the documents are represented as bag-of-words vectors which describe their contents. Many semi-supervised learning methods seek to jointly model the graph structure and feature attributes of data. A common idea is to regularize a supervised learner with some regularizer. For example, manifold regularization (LapSVM) \cite{belkin2006manifold} regularizes a support vector machine with a Laplacian regularizer. Deep semi-supervised embedding \cite{weston2012deep} regularizes a deep neural network with an embedding-based regularizer. Planetoid \cite{yang2016revisiting} also regularizes a neural network by jointly predicting the class label and the context of an instance.

\subsection{Graph Convolutional Networks}

Graph convolutional neural networks (GCNNs) generalize traditional convolutional neural networks to the graph domain. There are mainly two types of GCNNs \cite{bronstein2017geometric}: spatial GCNNs and spectral GCNNs. Spatial GCNNs view the convolution as ``patch operator'' which constructs a new feature vector for each vertex using its neighborhood information. Spectral GCNNs define the convolution by decomposing a graph signal $\mathbf{s} \in R^n$ (a scalar for each vertex) on the spectral domain and then applying a spectral filter $g_{\theta}$ (a function of eigenvalues of $L_{\mathrm{sym}}$) on the spectral components \cite{bruna2013spectral,sandryhaila2013discrete,shuman2013emerging}. However this model requires explicitly computing the Laplacian eigenvectors, which is impractical for real large graphs. A way to circumvent this problem is by approximating the spectral filter $g_{\theta}$ with Chebyshev polynomials up to $K^{th}$ order \cite{hammond2011wavelets}. In \cite{defferrard2016convolutional}, Defferrard et al. applied this to build a $K$-localized ChebNet, where the convolution is defined as:
\begin{equation}\label{eq:cheby}
g_{\theta} \star \mathbf{s} \approx  \sum_{k=0}^K \theta_k' T_k(L_{\mathrm{sym}}) \mathbf{s},
\end{equation}
where $\mathbf{s} \in R^n$ is the signal on the graph, $g_{\theta}$ is the spectral filter, $\star$ denotes the convolution operator, $T_k$ is the Chebyshev polynomials, and $\theta'\in R^K$ is a vector of Chebyshev coefficients. By the approximation, the ChebNet is actually spectrum-free.


In \cite{kipf2016semi}, Kipf and Welling simplified this model by limiting $K=1$ and approximating the largest eigenvalue $\lambda_{max}$ of $L_{\mathrm{sym}}$ by 2. In this way, the convolution becomes
\begin{equation}
\label{eq:kipfnew}
g_{\theta} \star \mathbf{s} = \theta \left(I + D^{-\frac{1}{2}} A D^{-\frac{1}{2}} \right) \mathbf{s},
\end{equation}
where $\theta$ is the only Chebyshev coefficient left. They further applied a normalization trick to the convolution matrix:
\begin{equation}
\label{eq:normalization}
I + D^{-\frac{1}{2}} A D^{-\frac{1}{2}} \rightarrow  \tilde{D}^{-\frac{1}{2}}\tilde{A}\tilde{D}^{-\frac{1}{2}},
\end{equation}
where $\tilde{A}=A+I$ and $\tilde{D}=\sum_j \tilde{A}_{ij} $.

Generalizing the above definition of convolution to a graph signal with $c$ input channels, i.e., $X \in R^{n \times c}$ (each vertex is associated with a $c$-dimensional feature vector), and using $f$ spectral filters, the propagation rule of this simplified model is:
\begin{equation}
\label{eq:convolution}
H^{(l+1)}= \sigma \left(
    \tilde{D}^{-\frac{1}{2}}\tilde{A}\tilde{D}^{-\frac{1}{2}} H^{(l)}\Theta^{(l)}
    \right),
\end{equation}
where $H^{(l)}$ is the matrix of activations in the $l$-th layer, and $H^{(0)} = X$, $\Theta^{(l)}\in R^{c \times f}$ is the trainable weight matrix in layer $l$, $\sigma$ is the activation function, e.g., $ReLU(\cdot) = max(0, \cdot)$.

This simplified model is called graph convolutional networks (GCNs), which is the focus of this paper.


\subsection{Semi-Supervised Classification with GCNs}



In \cite{kipf2016semi}, the GCN model was applied for semi-supervised classification in a neat way. The model used is a two-layer GCN which applies a \emph{softmax} classifier on the output features:
\begin{equation}
\label{eq:kipfGCN}
Z = \text{softmax}\left (
    \hat{A}\; ReLU\left(
    \hat{A}X\Theta^{(0)}
    \right)\Theta^{(1)}
\right),
\end{equation}
where $\hat{A}=\tilde{D}^{-\frac{1}{2}}\tilde{A}\tilde{D}^{-\frac{1}{2}}$, $\text{softmax}(x_i) = \frac{1}{\mathcal{Z}} exp(x_i)$ with $\mathcal{Z} = \sum_i exp(x_i)$. The loss function is defined as the cross-entropy error over all labeled examples:
\begin{equation}
\label{eq:loss}
\mathcal{L} :=  -\sum_{i\in \mathcal{V}_l} \sum_{f=1}^F Y_{if} \ln Z_{if},
\end{equation}
where $\mathcal{V}_l$ is the set of indices of labeled vertices and $F$ is the dimension of the output features, which is equal to the number of classes. $Y\in R^{|\mathcal{V}_l|\times F}$ is a label indicator matrix. The weight parameters $\Theta^{(0)}$ and $\Theta^{(1)}$ can be trained via gradient descent.

The GCN model naturally combines graph structures and vertex features in the convolution, where the features of unlabeled vertices are mixed with those of nearby labeled vertices, and propagated over the graph through multiple layers. It was reported in \cite{kipf2016semi} that GCNs outperformed many state-of-the-art methods significantly on some benchmarks such as citation networks.


\section{Analysis}\label{sec:Analysis}

Despite its promising performance, the mechanisms of the GCN model for semi-supervised learning have not been made clear. In this section, we take a closer look at the GCN model, analyze why it works, and point out its limitations.



\begin{table}
\centering
\caption{GCNs vs. Fully-connected networks}\label{tb:one_layer_gcn}
\begin{tabular}{cccc}
  \thead{ One-layer \\ FCN} & \thead{ Two-layer \\ FCN} & \thead{ One-layer \\ GCN} & \thead{ Two-layer \\ GCN}\\
  \midrule
   0.530860 &  0.559260 &  \textbf{0.707940} &  \textbf{0.798361}\\
\end{tabular}
\end{table}

\subsection{Why GCNs Work}

To understand the reasons why GCNs work so well, we compare them with the simplest fully-connected  networks (FCNs), where the layer-wise propagation rule is
\begin{equation}
H^{(l+1)} = \sigma \left( H^{(l)}\Theta^{(l)} \right ).
\end{equation}
Clearly the only difference between a GCN and a FCN is the graph convolution matrix $\hat{A}=\tilde{D}^{-\frac{1}{2}}\tilde{A}\tilde{D}^{-\frac{1}{2}}$ (Eq.~(\ref{eq:kipfGCN})) applied on the left of the feature matrix $X$. To see the impact of the graph convolution, we tested the performances of GCNs and FCNs for semi-supervised classification on the Cora citation network with 20 labels in each class. The results can be seen in Table \ref{tb:one_layer_gcn}. Surprisingly, even a one-layer GCN outperformed a one-layer FCN by a very large margin.

\begin{figure*}[ht!]
\centering
\begin{subfigure}{.2\linewidth}
  \centering
  \includegraphics[width=\textwidth]{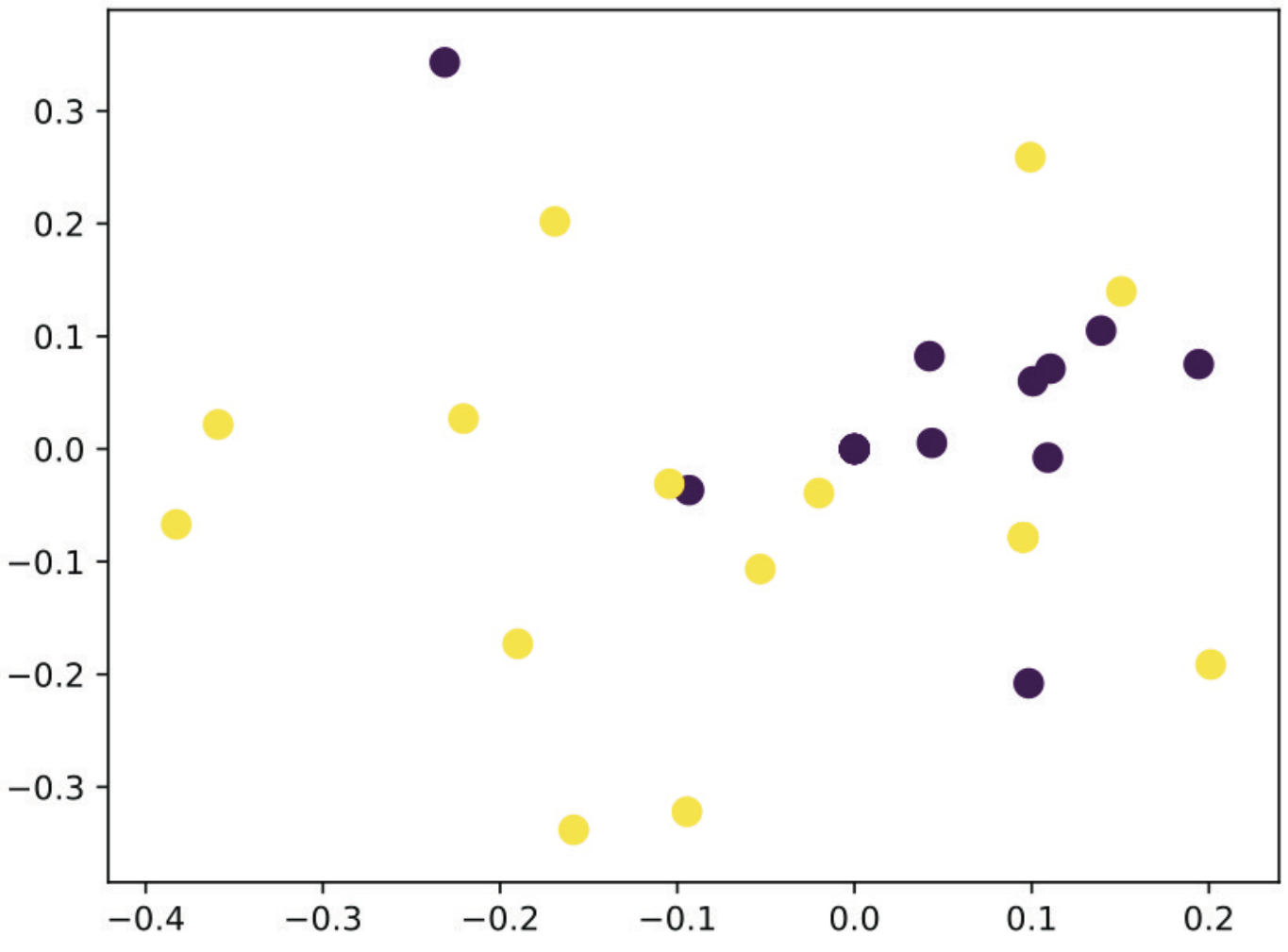}
  \caption{1-layer}
  \label{fig:sub1}
\end{subfigure}%
\begin{subfigure}{.2\linewidth}
  \centering
  \includegraphics[width=\textwidth]{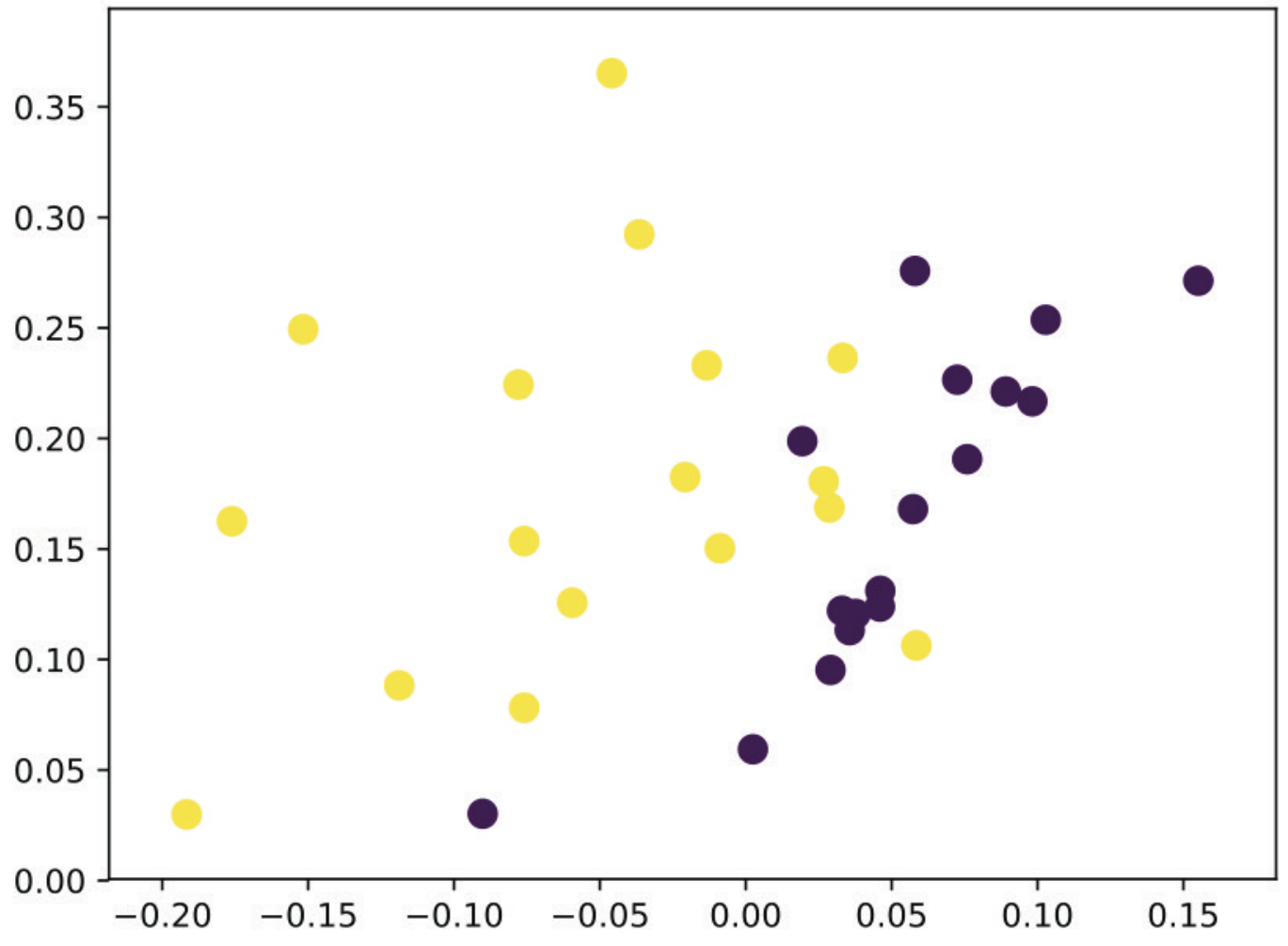}
  \caption{2-layer}
  \label{fig:sub2}
\end{subfigure}%
\begin{subfigure}{.2\linewidth}
  \centering
  \includegraphics[width=\textwidth]{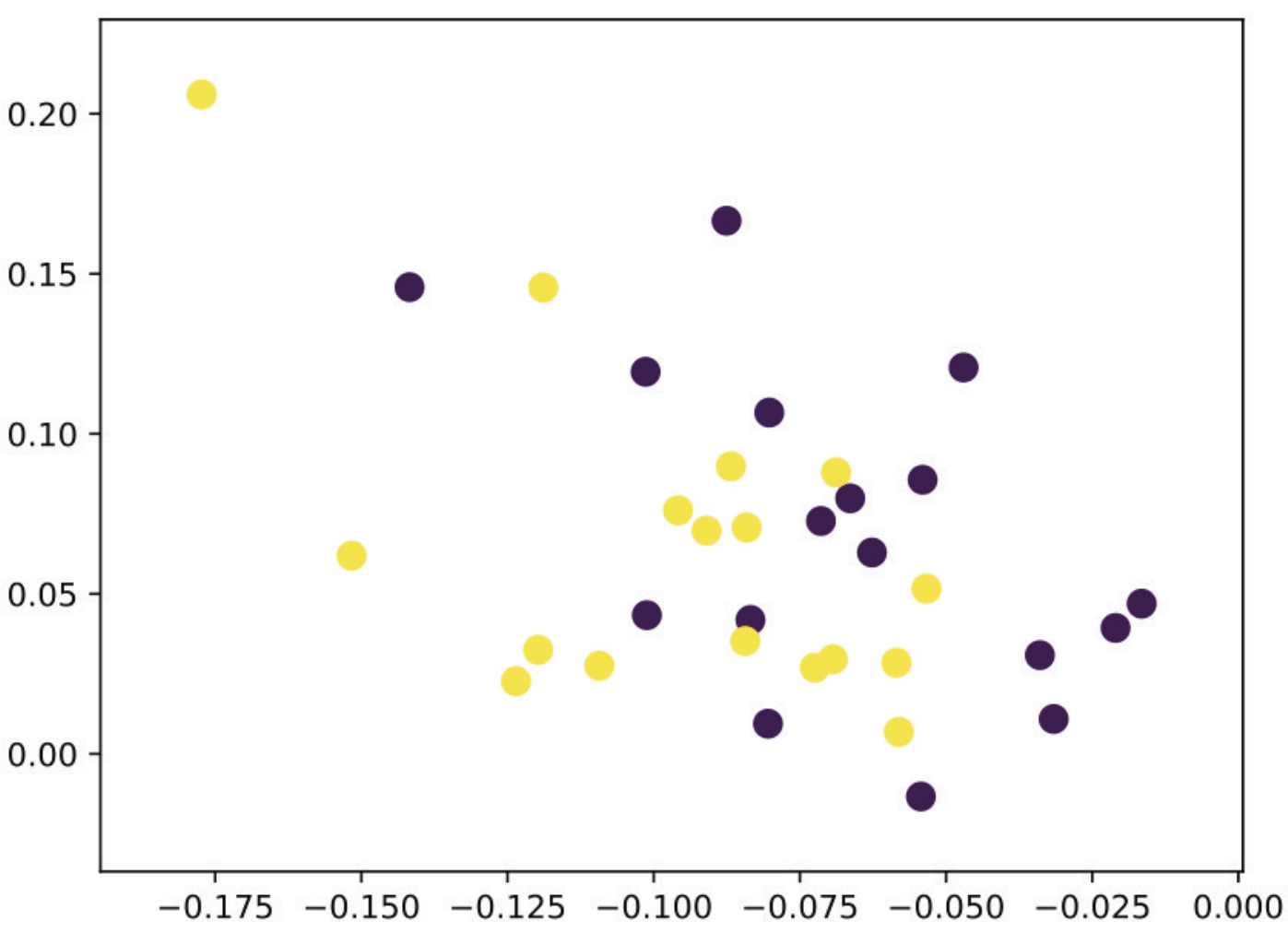}
  \caption{3-layer}
  \label{fig:sub3}
\end{subfigure}%
\begin{subfigure}{.2\linewidth}
  \centering
  \includegraphics[width=\textwidth]{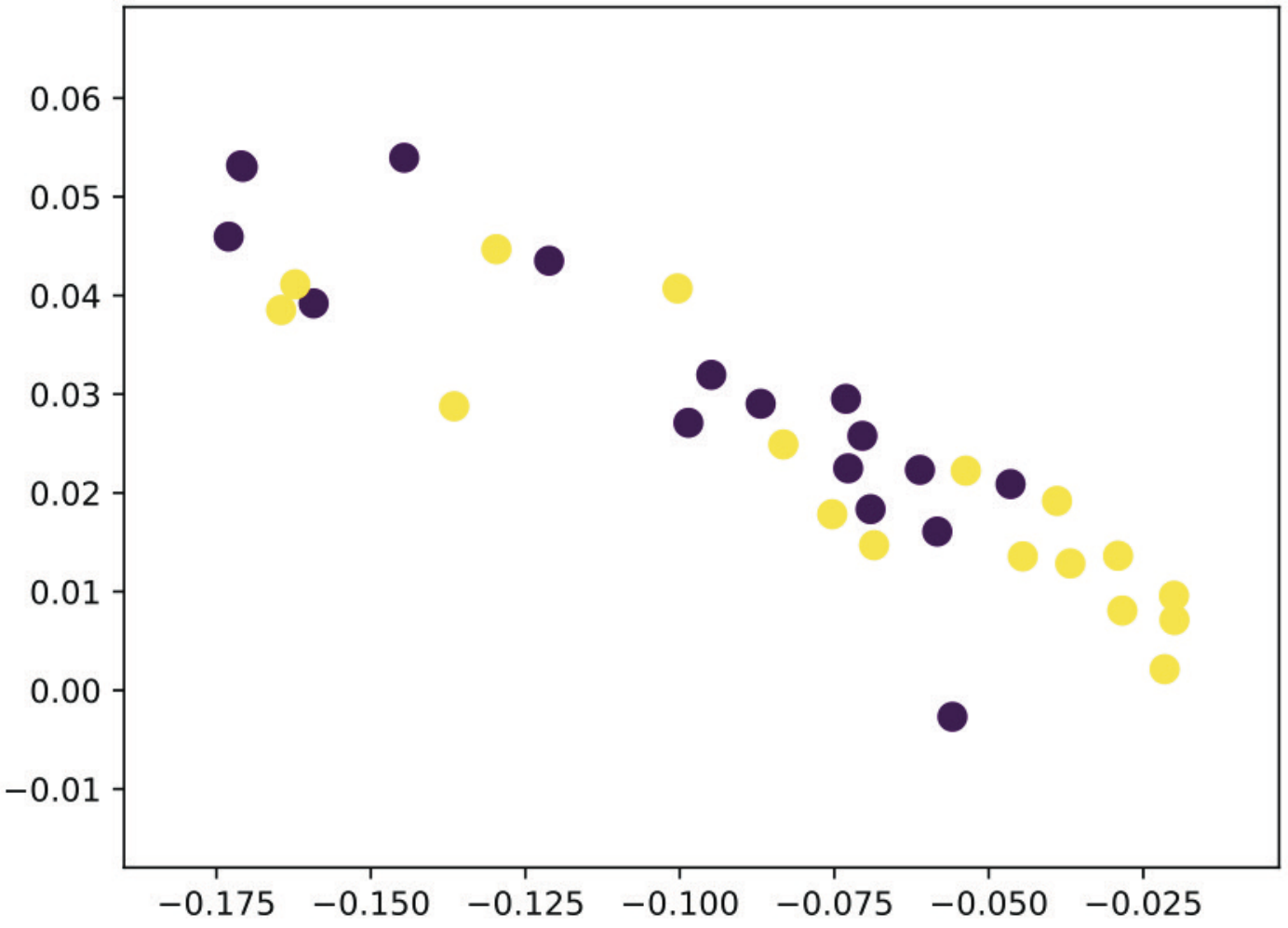}
  \caption{4-layer}
  \label{fig:sub4}
\end{subfigure}%
\begin{subfigure}{.2\linewidth}
  \centering
  \includegraphics[width=\textwidth]{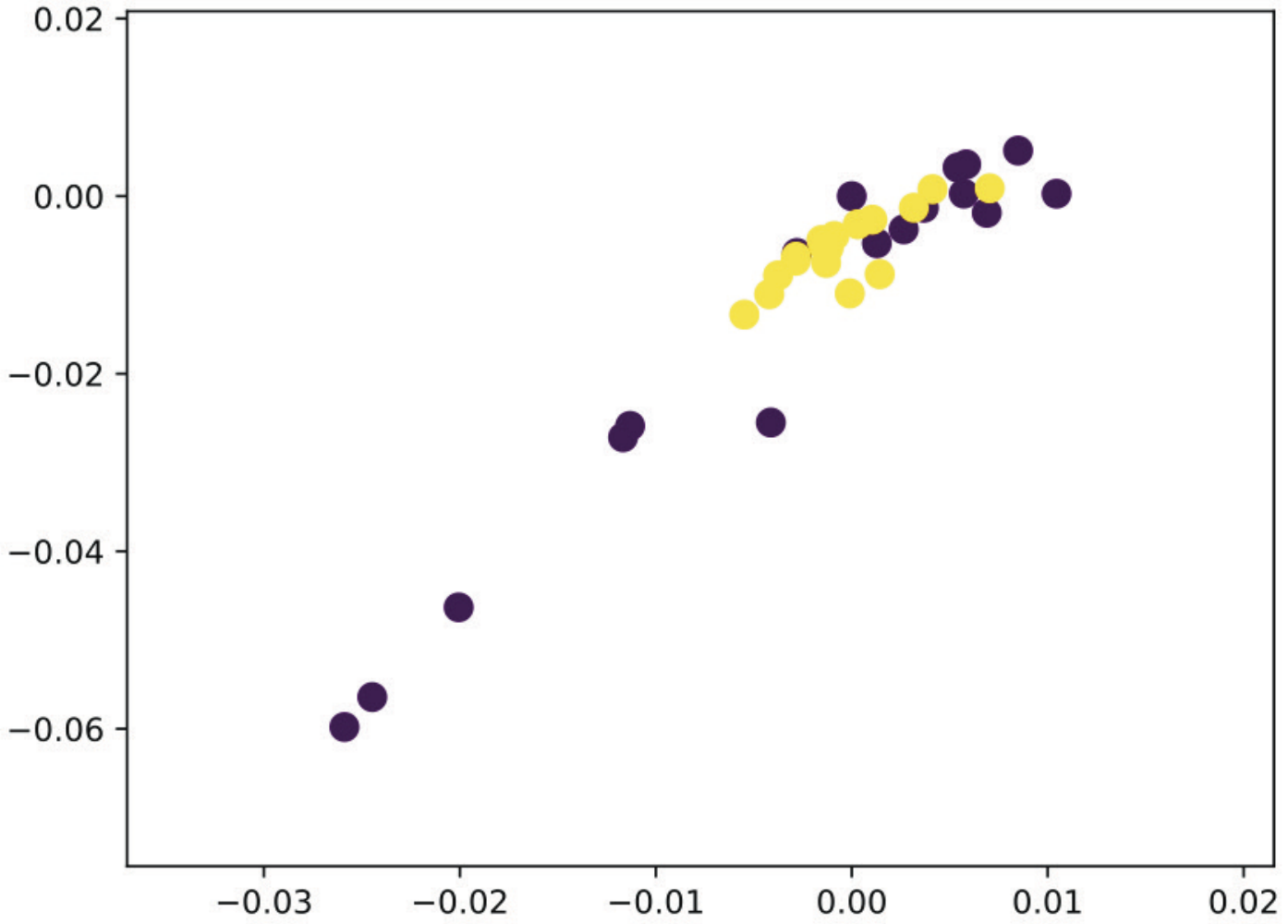}
  \caption{5-layer}
  \label{fig:sub5}
\end{subfigure}%
\caption{Vertex embeddings of Zachary's karate club network with GCNs with 1,2,3,4,5 layers.}
\label{fig:karate}
\end{figure*}

\textbf{Laplacian Smoothing.} Let us first consider a one-layer GCN. It actually contains two steps. 1) Generating a new feature matrix $Y$ from $X$ by applying the graph convolution:
\begin{equation}
    Y=\tilde{D}^{-1/2}\tilde{A}\tilde{D}^{-1/2}X \label{eq:conv}.
\end{equation}
2) Feeding the new feature matrix $Y$ to a fully connected layer. Clearly the graph convolution is the key to the huge performance gain.

Let us examine the graph convolution carefully. Suppose that we add a self-loop to each vertex in the graph, then the adjacency matrix of the new graph is $\tilde{A}=A+I$. The Laplacian smoothing \cite{taubin1995signal} on each channel of the input features is defined as:
\begin{equation}
    \mathbf{\hat{y}}_i= (1-\gamma)\mathbf{x}_i + \gamma\sum_{j}\frac{\tilde{a}_{ij}}{d_i}\mathbf{x}_j \qquad(\mathrm{for} ~~1\leq i \leq n),
\end{equation}
where $0<\gamma\le1$ is a parameter which controls the weighting between the features of the current vertex and the features of its neighbors. We can write the Laplacian smoothing in matrix form:
\begin{equation}
    \label{eq:Laplacian smoothing}
    \hat{Y}=X-\gamma \tilde{D}^{-1}\tilde{L} X = (I-\gamma\tilde{D}^{-1}\tilde{L})X,
\end{equation}
where $\tilde{L}=\tilde{D}-\tilde{A}$. By letting $\gamma=1$, i.e., only using the neighbors' features, we have $\hat{Y}=\tilde{D}^{-1}\tilde{A}X$, which is the standard form of Laplacian smoothing.

Now if we replace the normalized Laplacian $\tilde{D}^{-1}\tilde{L}$ with the symmetrically normalized Laplacian $\tilde{D}^{-\frac{1}{2}}\tilde{L}\tilde{D}^{-\frac{1}{2}}$ and let $\gamma=1$, we have $\hat{Y}=\tilde{D}^{-1/2}\tilde{A}\tilde{D}^{-1/2}X$, which is exactly the graph convolution in Eq.~(\ref{eq:conv}). We thus call the graph convolution a special form of Laplacian smoothing -- symmetric Laplacian smoothing. Note that here the smoothing still includes the current vertex's features, as each vertex has a self-loop and is its own neighbor.


%


The Laplacian smoothing computes the new features of a vertex as the weighted average of itself and its neighbors'. Since vertices in the same cluster tend to be densely connected, the smoothing makes their features similar, which makes the subsequent classification task much easier. As we can see from Table \ref{tb:one_layer_gcn}, applying the smoothing only once has already led to a huge performance gain.


\textbf{Multi-layer Structure.} We can also see from Table \ref{tb:one_layer_gcn} that while the 2-layer FCN only slightly improves over the 1-layer FCN, the 2-layer GCN significantly improves over the 1-layer GCN by a large margin. This is because applying smoothing again on the activations of the first layer makes the output features of vertices in the same cluster more similar and further eases the classification task.


\subsection{When GCNs Fail}




We have shown that the graph convolution is essentially a type of Laplacian smoothing. A natural question is how many convolutional layers should be included in a GCN? Certainly not the more the better. On the one hand, a GCN with many layers is difficult to train. On the other hand, repeatedly applying Laplacian smoothing may mix the features of vertices from different clusters and make them indistinguishable. In the following, we illustrate this point with a popular dataset.




We apply GCNs with different number of layers on the Zachary's karate club dataset \cite{zachary1977information}, which has 34 vertices of two classes and 78 edges. The GCNs are untrained with the weight parameters initialized randomly as in \cite{glorot2010understanding}. The dimension of the hidden layers is 16, and the dimension of the output layer is 2. The feature vector of each vertex is a one-hot vector. The outputs of each GCN are plotted as two-dimensional points in Fig.~\ref{fig:karate}. We can observe the impact of the graph convolution (Laplacian smoothing) on this small dataset. Applying the smoothing once, the points are not well-separated (Fig.~\ref{fig:sub1}). Applying the smoothing twice, the points from the two classes are separated relatively well. Applying the smoothing again and again, the points are mixed (Fig.~\ref{fig:sub3}, \ref{fig:sub4}, \ref{fig:sub5}). As this is a small dataset and vertices between two classes have quite a number of connections, the mixing happens quickly.

In the following, we will prove that by repeatedly applying Laplacian smoothing many times, the features of vertices within each connected component of the graph will converge to the same values. For the case of symmetric Laplacian smoothing, they will converge to be proportional to the square root of the vertex degree.

Suppose that a graph $\mathcal{G}$ has $k$ connected components $\{C_i\}_{i=1}^{k}$, and the indication vector for the $i$-th component is denoted by $\mathbf{1}^{(i)}\in\mathbb{R}^n$. This vector indicates whether a vertex is in the component $C_i$, i.e.,
\begin{equation}
    \mathbf{1}^{(i)}_j=\left\{
    \begin{array}{l}
        1, v_j \in C_i \\
        0, v_j \not\in C_i
       \end{array} \right.
\end{equation}
\newtheorem{theorem}{Theorem}
\begin{theorem}
    If a graph has no bipartite components, then for any $\mathbf{w}\in \mathbb{R}^n$, and $\alpha \in (0,1]$,
    \begin{align}
        &\lim_{m\to+\infty}{(I-\alpha L_{rw})}^m \mathbf{w} = [\mathbf{1}^{(1)}, \mathbf{1}^{(2)}, \dots, \mathbf{1}^{(k)}]\theta_1,\nonumber \\
        &\lim_{m\to+\infty}{(I-\alpha L_{sym})}^m \mathbf{w}   = D^{-\frac{1}{2}}[\mathbf{1}^{(1)}, \mathbf{1}^{(2)}, \dots, \mathbf{1}^{(k)}]\theta_2,\nonumber
    \end{align}
    where $\theta_1\in \mathbb{R}^k, \theta_2\in \mathbb{R}^k$, i.e., they converge to a linear combination of $\{\mathbf{1}^{(i)}\}_{i=1}^{k}$ and $\{D^{-\frac{1}{2}}\mathbf{1}^{(i)}\}_{i=1}^{k}$ respectively.
\end{theorem}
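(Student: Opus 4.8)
The plan is to reduce both limits to a single spectral statement about the symmetric matrix $L_{\mathrm{sym}}$, use that its spectrum lies in $[0,2)$ when no component is bipartite, and then pass to the limit eigenvector by eigenvector.

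First I would eliminate $L_{\mathrm{rw}}$ from the picture. Since $L_{\mathrm{rw}}=D^{-1}L=D^{-1/2}\bigl(D^{-1/2}LD^{-1/2}\bigr)D^{1/2}=D^{-1/2}L_{\mathrm{sym}}D^{1/2}$, we have $(I-\alpha L_{\mathrm{rw}})^{m}=D^{-1/2}(I-\alpha L_{\mathrm{sym}})^{m}D^{1/2}$ for every $m$. Thus if $(I-\alpha L_{\mathrm{sym}})^{m}$ converges to a matrix whose range is $\ker L_{\mathrm{sym}}$, then $(I-\alpha L_{\mathrm{rw}})^{m}$ converges to $D^{-1/2}(\cdot)D^{1/2}$ applied to that limit; the two answers then differ exactly by the factor $D^{-1/2}$, which is why $\mathbf{1}^{(i)}$ appears in the random-walk case and a degree-weighted version in the symmetric case. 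This reduction also handles the fact that $L_{\mathrm{rw}}$ is not symmetric: it is still diagonalizable, being similar to $L_{\mathrm{sym}}$.

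Next I would diagonalize $L_{\mathrm{sym}}$. Being real symmetric and positive semidefinite, it has an orthonormal eigenbasis $e_1,\dots,e_n$ with eigenvalues $0\le\lambda_1\le\dots\le\lambda_n\le 2$, and I would invoke three standard facts \cite{Chung97}: $\lambda\le 2$ always; the eigenvalue $0$ has multiplicity $k$ with eigenspace $\ker L_{\mathrm{sym}}=\mathrm{span}\{D^{1/2}\mathbf{1}^{(1)},\dots,D^{1/2}\mathbf{1}^{(k)}\}$ (immediate from $L\mathbf{1}^{(i)}=0$ and invertibility of $D^{1/2}$); and the eigenvalue $2$ occurs if and only if some connected component is bipartite. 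The last fact -- proved component by component via the correspondence $\lambda\in\mathrm{spec}(L_{\mathrm{sym}})\iff 1-\lambda\in\mathrm{spec}(D^{-1/2}AD^{-1/2})$, together with the Perron--Frobenius fact that a connected graph's normalized adjacency matrix has $-1$ as an eigenvalue precisely when it is bipartite -- is where the hypothesis is actually used, and it is the step I expect to require the most care.

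Given no bipartite component, every $\lambda_i$ lies in $[0,2)$, so the eigenvalues $1-\alpha\lambda_i$ of $I-\alpha L_{\mathrm{sym}}$ equal $1$ for $i\le k$ and satisfy $-1<1-\alpha\lambda_i<1$ for $i>k$ (since $0<\alpha\lambda_i<2$ when $\alpha\in(0,1]$). Writing $\mathbf{w}=\sum_{i=1}^{n}c_i e_i$ gives
\[
(I-\alpha L_{\mathrm{sym}})^{m}\mathbf{w}=\sum_{i=1}^{n}(1-\alpha\lambda_i)^{m}c_i\,e_i\ \longrightarrow\ \sum_{i=1}^{k}c_i\,e_i\qquad(m\to\infty),
\]
because $(1-\alpha\lambda_i)^m\equiv1$ for $i\le k$ and $(1-\alpha\lambda_i)^m\to0$ for $i>k$. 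The limit lies in $\ker L_{\mathrm{sym}}$ -- a linear combination of the $D^{1/2}\mathbf{1}^{(i)}$, i.e.\ proportional to $\sqrt{d_j}$ on each component, as stated before the theorem -- hence has the form $D^{1/2}[\mathbf{1}^{(1)},\dots,\mathbf{1}^{(k)}]\theta_2$ for some $\theta_2\in\mathbb{R}^k$; conjugating by $D^{-1/2}(\cdot)D^{1/2}$ as above turns $D^{1/2}\mathbf{1}^{(i)}$ into $\mathbf{1}^{(i)}$ and yields the $L_{\mathrm{rw}}$ claim. Note $\alpha=1$ is precisely the value that would fail if a bipartite component were allowed, since then $\lambda=2$ gives $1-\alpha\lambda=-1$ and the powers oscillate rather than converge. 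The genuinely routine parts I would not belabor: the bound $\lambda\le 2$, the identification of $\ker L$, and the change of basis between $\{e_i\}$ and $\{\mathbf{1}^{(i)}\}$.
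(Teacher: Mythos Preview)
Your argument is correct and follows the same spectral route as the paper: both use that the eigenvalues of $L_{\mathrm{sym}}$ (hence of $L_{\mathrm{rw}}$) lie in $[0,2)$ when no component is bipartite, so the eigenvalues of $I-\alpha L$ other than $1$ lie strictly in $(-1,1)$ and their powers vanish, leaving only the projection onto the $0$-eigenspace. Your reduction of the $L_{\mathrm{rw}}$ case to the symmetric case via the similarity $L_{\mathrm{rw}}=D^{-1/2}L_{\mathrm{sym}}D^{1/2}$ is a tidy organizational choice the paper does not make (it simply treats the two operators in parallel), and you correctly identify $\ker L_{\mathrm{sym}}=\mathrm{span}\{D^{1/2}\mathbf{1}^{(i)}\}$; the paper writes $D^{-1/2}\mathbf{1}^{(i)}$ in both the statement and proof, which is a slip inconsistent with its own preceding remark that the symmetric limit is proportional to $\sqrt{d_j}$.
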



\begin{proof}
    $L_{rw}$ and $L_{sym}$ have the same $n$ eigenvalues (by multiplicity) with different eigenvectors \cite{von2007tutorial}. If a graph has no bipartite components, the eigenvalues all fall in [0,2) \cite{Chung97}. The eigenspaces of $L_{rw}$ and $L_{sym}$ corresponding to eigenvalue 0 are spanned by $\{\mathbf{1}^{(i)}\}_{i=1}^{k}$ and $\{D^{-\frac{1}{2}}\mathbf{1}^{(i)}\}_{i=1}^{k}$
    respectively \cite{von2007tutorial}. For $\alpha\in(0,1]$, the eigenvalues of $(I-\alpha L_{rw})$ and $(I-\alpha L_{sym})$ all fall into (-1,1], and the eigenspaces of eigenvalue 1 are spanned by $\{\mathbf{1}^{(i)}\}_{i=1}^{k}$ and $\{D^{-\frac{1}{2}}\mathbf{1}^{(i)}\}_{i=1}^{k}$ respectively.
    Since the absolute value of all eigenvalues of $(I-\alpha L_{rw})$ and $(I-\alpha L_{sym})$ are less than or equal to 1, after repeatedly multiplying them from the left, the result will converge to the linear combination of eigenvectors of eigenvalue 1, i.e. the linear combination of $\{\mathbf{1}^{(i)}\}_{i=1}^{k}$ and $\{D^{-\frac{1}{2}}\mathbf{1}^{(i)}\}_{i=1}^{k}$ respectively.
\end{proof}
Note that since an extra self-loop is added to each vertex, there is no bipartite component in the graph. Based on the above theorem, over-smoothing will make the features indistinguishable and hurt the classification accuracy.


The above analysis raises potential concerns about stacking many convolutional layers in a GCN. Besides, a deep GCN is much more difficult to train. In fact, the GCN used in \cite{kipf2016semi} is a 2-layer GCN. However, since the graph convolution is a localized filter -- a linear combination of the feature vectors of adjacent neighbors, a shallow GCN cannot sufficiently propagate the label information to the entire graph with only a few labels. As shown in Fig.~\ref{fig:gcn_validation}, the performance of GCNs (with or without validation) drops quickly as the training size shrinks. In fact, the accuracy of GCNs decreases much faster than the accuracy of label propagation. Since label propagation only uses the graph information while GCNs utilize both structural and vertex features, it reflects the inability of the GCN model in exploring the global graph structure.

Another problem with the GCN model in \cite{kipf2016semi} is that it requires an additional validation set for early stopping in training, which is essentially using the prediction accuracy on the validation set  for model selection. If we optimize a GCN on the training data without using the validation set, it will have a significant drop in performance. As shown in Fig.~\ref{fig:gcn_validation}, the performance of the GCN without validation drops much sharper than the GCN with validation. In \cite{kipf2016semi}, the authors used an additional set of 500 labeled data for validation, which is much more than the total number of training data. This is certainly undesirable as it defeats the purpose of semi-supervised learning. Furthermore, it makes the comparison of GCNs with other methods unfair as other methods such as label propagation may not need the validation data at all.

\section{Solutions}\label{sec:Solution}

We summarize the advantages and disadvantages of the GCN model as follows. The advantages are: 1) the graph convolution -- Laplacian smoothing helps making the classification problem much easier; 2) the multi-layer neural network is a powerful feature extractor. The disadvantages are: 1) the graph convolution is a localized filter, which performs unsatisfactorily with few labeled data; 2) the neural network needs considerable amount of labeled data for validation and model selection.


We want to make best use of the advantages of the GCN model while overcoming its limits. This naturally leads to a co-training \cite{blum1998combining} idea.

\subsection{Co-Train a GCN with a Random Walk Model}

We propose to co-train a GCN with a random walk model as the latter can explore the global graph structure, which complements the GCN model. In particular, we first use a random walk model to find the most confident vertices -- the nearest neighbors to the labeled vertices of each class, and then add them to the label set to train a GCN. Unlike in \cite{kipf2016semi}, we directly optimize the parameters of a GCN on the training set, without requiring additional labeled data for validation.

\begin{algorithm}
    \caption{Expand the Label Set via ParWalks}\label{alg:co-train}
    \label{parwalk}
    \begin{algorithmic}[1]
    \STATE $P:=(L+\alpha \Lambda)^{-1}$
    \FOR{each class $k$}
    \STATE $\bm {p}:=\sum\limits_{j \in \mathcal{S}_k}P_{:,j}$  \\
     \STATE Find the top $t$ vertices in $\bm p$ 
     \STATE Add them to the training set with label $k$
    \ENDFOR
    \end{algorithmic}
\end{algorithm}

We choose to use the partially absorbing random walks (ParWalks) \cite{Wu12parw} as our random walk model. A partially absorbing random walk is a second-order Markov chain with partial absorption at each state. It was shown in \cite{nips13_harmonic} that with proper absorption settings, the absorption probabilities can well capture the global graph structure. Importantly, the absorption probabilities can be computed in a closed-form by solving a simple linear system, and can be fast approximated by random walk sampling or scaled up on top of vertex-centric graph engines \cite{guo2017graph}.



The algorithm to expand the training set is described in Algorithm \ref{alg:co-train}.
First, we calculate the normalized absorption probability matrix $P=(L+\alpha \Lambda)^{-1}$ (the choice of $\Lambda$ may depend on data). $P_{i,j}$ is the probability of a random walk from vertex $i$ being absorbed by vertex $j$, which represents how likely $i$ and $j$ belong to the same class. Second, we need to measure the confidence of a vertex belonging to class $k$. We partition the labeled vertices into ${\mathcal{S}_1, \mathcal{S}_2, ...}$, where $\mathcal{S}_k$ denotes the set of labeled data of class $k$. For each class $k$, we calculate a confidence vector $\mathbf{p} = \sum\limits_{j \in \mathcal{S}_k}P_{:,j}$,
where $\mathbf{p} \in \mathbb{R}^n$ and $p_{i}$ is the confidence of vertex $i$ belonging to class $k$. Finally, we find the $t$ most confident vertices and add them to the training set with label $k$ to train a GCN.


\subsection{GCN Self-Training}

Another way to make a GCN ``see'' more training examples is to self-train a GCN. Specifically, we first train a GCN with given labels, then select the most confident predictions for each class by comparing the softmax scores, and add them to the label set. We then continue to train the GCN with the expanded label set, using the pre-trained GCN as initialization. This is described in Algorithm \ref{alg:self-train}.


The most confident instances found by the GCN are supposed to share similar (but not the same) features with the labeled data. Adding them to the labeled set will help training a more robust and accurate classifier. Furthermore, it complements the co-training method in the situation that a graph has many isolated small components and it is not possible to propagate labels with random walks.

\begin{algorithm}
    \caption{Expand the Label Set via Self-Training}\label{alg:self-train}
    \label{parwalk}
    \begin{algorithmic}[1]
    \STATE $\bm Z:=GCN(X) \in \mathbb{R}^{n\times F}$, the output of GCN
    \FOR{each class $k$}
    \STATE Find the top $t$ vertices in $Z_{i,k}$
    \STATE Add them to the training set with label $k$
    \ENDFOR
    \end{algorithmic}
\end{algorithm}


\textbf{Combine Co-Training and Self-Training.} To improve the diversity of labels and train a more robust classifier, we propose to combine co-training and self-learning. Specifically, we expand the label set with the most confident predictions found by the random walk and those found by the GCN itself, and then use the expanded label set to continue to train the GCN. We call this method ``Union''. To find more accurate labels to add to the labeled set, we also propose to add the most confident predictions found by both the random walk and the GCN. We call this method ``Intersection''.


Note that we optimize all our methods on the expanded label set, without requiring any additional validation data. As long as the expanded label set contains enough correct labels, our methods are expected to train a good GCN classifier. But how much labeled data does it require to train a GCN? Suppose that the number of layers of the GCN is $\tau$, and the average degree of the underlying graph is $\hat{d}$. We propose to estimate the lower bound of the number of labels $\eta=|\mathcal{V}_l|$ by solving  $(\hat{d})^{\tau}*\eta \approx n.$ The rationale behind this is to estimate how many labels are needed to for a GCN with $\tau$ layers to propagate them to cover the entire graph.





\section{Experiments}\label{sec:Experiment}

In this section, we conduct extensive experiments on real benchmarks to verify our theory and the proposed methods, including Co-Training, Self-Training, Union, and Intersection (see Section \ref{sec:Solution}).




We compare our methods with several state-of-the-art methods, including
GCN with validation (GCN+V); GCN without validation (GCN-V); GCN with Chebyshev filter (Cheby) \cite{kipf2016semi}; label propagation using ParWalks (LP) \cite{Wu12parw}; Planetoid \cite{yang2016revisiting}; DeepWalk \cite{perozzi2014deepwalk}; manifold regularization (ManiReg) \cite{belkin2006manifold}; semi-supervised embedding (SemiEmb) \cite{weston2012deep}; iterative classification algorithm (ICA) \cite{sen2008collective}.

\subsection{Experimental Setup}

We conduct experiments on three commonly used citation networks: CiteSeer, Cora, and PubMed \cite{sen2008collective}. The statistics of the datasets are summarized in \tablename\;\ref{tab:dataset}. On each dataset, a document is described by a bag-of-words feature vector, i.e., a 0/1-valued vector indicating the absence/presence of a certain word. The citation links between documents are described by a 0/1-valued adjacency matrix. The datasets we use for testing are provided by the authors of \cite{yang2016revisiting} and \cite{kipf2016semi}.


For ParWalks, we set $\Lambda = I$, and $\alpha=10^{-6}$, following \citeauthor{Wu12parw}. For GCNs, we use the same hyper-parameters as in \cite{kipf2016semi}: a learning rate of $0.01$, $200$ maximum epochs, $0.5$ dropout rate, $5\times10^{-4}$ $L2$ regularization weight, 2 convolutional layers, and $16$ hidden units, which are validated on Cora by \citeauthor{kipf2016semi}. For each run, we randomly split labels into a small set for training, and a set with 1000 samples for testing. For GCN+V, we follow \cite{kipf2016semi} to sample additional 500 labels for validation. For GCN-V, we simply optimize the GCN using training accuracy. For Cheby, we set the polynomial degree $K=2$ (see Eq.~(\ref{eq:cheby})). We test these methods with 0.5\%, 1\%, 2\%, 3\%, 4\%, 5\% training size on Cora and CiteSeer, and with 0.03\%, 0.05\%, 0.1\%, 0.3\% training size on PubMed. We choose these labeling rates for easy comparison with \cite{kipf2016semi}, \cite{yang2016revisiting}, and other methods. We report the mean accuracy of 50 runs except for the results on PubMed \cite{yang2016revisiting}, which are averaged over 10 runs.


\begin{table}
\caption{Dataset statistics}\label{tab:dataset}
\centering
\begin{tabular}{lrrrr}
\textbf{Dataset}    &    \textbf{Nodes}    &    \textbf{Edges}    &    \textbf{Classes}    &    \textbf{Features} \\
\midrule
CiteSeer    &    3327    &    4732    &    6        &    3703    \\
Cora     &    2708    &    5429    &    7        &    1433    \\
PubMed    &    19717    &    44338    &    3        &    500    \\
\end{tabular}
\end{table}


%
%

\subsection{Results Analysis}

The classification results are summarized in \tablename\;\ref{tab:cora}, \ref{tab:citeseer} and \ref{tab:pubmed}, where the highest accuracy in each column is highlighted in bold and the top 3 are underlined. Our methods are displayed at the bottom half of each table.

We can see that the performance of Co-Training is closely related to the performance of LP. If the data has strong manifold structure, such as PubMed, Co-Training performs the best. In contrast, Self-Training is the worst on PubMed, as it does not utilize the graph structure. But Self-Training does well on CiteSeer where Co-Training is overall the worst. Intersection performs better when the training size is relatively large, because it filters out many labels. Union performs best in many cases since it adds more diverse labels to the training set.

\textbf{Comparison with GCNs.}
At a glance, we can see that on each dataset, our methods outperform others by a large margin in most cases. When the training size is small, all our methods are far better than GCN-V, and much better than GCN+V in most cases. For example, with labeling rate 1\% on Cora and CiteSeer, our methods improve over GCN-V by 23\% and 28\%, and improve over GCN+V by 12\% and 7\%. With labeling rate 0.05\% on PubMed, our methods improve over GCN-V and GCN+V by 37\% and 18\% respectively. This verifies our analysis that the GCN model cannot effectively propagate labels to the entire graph with small training size. When the training size grows, our methods are still better than GCN+V in most cases, demonstrating the effectiveness of our approaches. When the training size is large enough, our methods and GCNs perform similarly, indicating that the given labels are sufficient for training a good GCN classifier. Cheby does not perform well in most cases, which is probably due to overfitting.

\begin{table}[t]
\centering
\small
\caption{Classification Accuracy On Cora}\label{tab:cora}
\begin{tabular}{l cccccc}
    \multicolumn{7}{c}{ \normalsize \textbf{Cora}} \\
    \textbf{Label Rate} & 0.5\% & 1\% & 2\% & 3\% & 4\% & 5\% \\
    \midrule
    \textbf{LP}             & \underline{56.4} & 62.3 & 65.4 & 67.5 & 69.0 & 70.2 \\
    \textbf{Cheby}          & 38.0 & 52.0 & 62.4 & 70.8 & 74.1 & 77.6 \\
    \textbf{GCN-V}          & 42.6 & 56.9 & 67.8 & 74.9 & 77.6 & 79.3 \\
    \textbf{GCN+V}          & 50.9 & 62.3 & 72.2 & 76.5 & 78.4 & 79.7 \\
    \midrule
    \textbf{Co-training}    & \underline{56.6} & \underline{66.4} & \underline{73.5} & 75.9 & 78.9 & \underline{80.8} \\
    \textbf{Self-training}  & 53.7 & \underline{66.1} & \underline{73.8} & \underline{77.2} & \underline{79.4} & 80.0 \\
    \textbf{Union}          & \underline{\textbf{58.5}} & \underline{\textbf{69.9}} & \underline{\textbf{75.9}} & \underline{\textbf{78.5}} & \underline{\textbf{80.4}} & \underline{\textbf{81.7}} \\
    \textbf{Intersection}   & 49.7 & 65.0 & 72.9 & \underline{77.1} & \underline{79.4} & \underline{80.2} \\
\end{tabular}
\end{table}

\begin{table}[t]
\small
\centering
\caption{Classification Accuracy on CiteSeer}\label{tab:citeseer}
\begin{tabular}{l cccccc}
    \multicolumn{7}{c}{\normalsize \textbf{CiteSeer}}\\
    \textbf{Label Rate} & 0.5\% & 1\% & 2\% & 3\% & 4\% & 5\% \\
    \midrule
    \textbf{LP}             & 34.8 & 40.2 & 43.6 & 45.3 & 46.4 & 47.3 \\
    \textbf{Cheby}          & 31.7 & 42.8 & 59.9 & 66.2 & 68.3 & 69.3 \\
    \textbf{GCN-V}          & 33.4 & 46.5 & 62.6 & 66.9 & 68.4 & 69.5 \\
    \textbf{GCN+V}          & \underline{43.6} & 55.3 & 64.9 & \underline{67.5} & \underline{68.7} & \underline{69.6} \\
    \midrule
    \textbf{Co-training}    & \underline{\textbf{47.3}} & 55.7 & 62.1 & 62.5 & 64.5 & 65.5 \\
    \textbf{Self-training}  & 43.3 & \underline{58.1} & \underline{68.2} & \underline{69.8} & \underline{70.4} & \underline{71.0} \\
    \textbf{Union}          & \underline{46.3} & \underline{\textbf{59.1}} & \underline{66.7} & 66.7 & 67.6 & 68.2 \\
    \textbf{Intersection}   & 42.9 & \underline{\textbf{59.1}} & \underline{\textbf{68.6}} & \underline{\textbf{70.1}} & \underline{\textbf{70.8}} & \underline{\textbf{71.2}} \\
\end{tabular}
\end{table}

\begin{table}[t]
\centering
\caption{Classification Accuracy On PubMed}\label{tab:pubmed}
\normalsize
\begin{tabular}{l cccc}
    \multicolumn{5}{c}{\normalsize \textbf{PubMed}} \\
    \textbf{Label Rate} & 0.03\% & 0.05\% & 0.1\% & 0.3\% \\
    \midrule
    \textbf{LP}             & \underline{61.4} & \underline{66.4} & 65.4 & 66.8 \\
    \textbf{Cheby}          & 40.4 & 47.3 & 51.2 & 72.8 \\
    \textbf{GCN-V}          & 46.4 & 49.7 & 56.3 & 76.6 \\
    \textbf{GCN+V}          & \underline{60.5} & 57.5 & 65.9 & \underline{77.8} \\
    \midrule
    \textbf{Co-training}    & \underline{\textbf{62.2}} & \underline{\textbf{68.3}} & \underline{\textbf{72.7}} & \underline{78.2} \\
    \textbf{Self-training}  & 51.9 & 58.7 & 66.8 & 77.0 \\
    \textbf{Union}          & 58.4 & \underline{64.0} & \underline{70.7} & \underline{\textbf{79.2}} \\
    \textbf{Intersection}   & 52.0 & 59.3 & \underline{69.4} & 77.6 \\
\end{tabular}
\end{table}

\begin{table}[t]
    \caption{Accuracy under 20 Labels per Class}\label{tab:gcn_baseline}
    \centering
    \begin{tabular}{l ccc}
    \textbf{Method} & \textbf{CiteSeer} & \textbf{Cora} & \textbf{Pubmed} \\
    \midrule
    \textbf{ManiReg}   & 60.1 & 59.5 & 70.7 \\
    \textbf{SemiEmb}   & 59.6 & 59.0 & 71.7 \\
    \textbf{LP}        & 45.3 & 68.0 & 63.0 \\
    \textbf{DeepWalk}  & 43.2 & 67.2 & 65.3 \\
    \textbf{ICA}       & \underline{69.1} & 75.1 & 73.9 \\
    \textbf{Planetoid} & 64.7 & 75.7 & \underline{77.2} \\
    \textbf{GCN-V}     & 68.1 & 80.0 & 78.2 \\
    \textbf{GCN+V}     & \underline{68.9} & \underline{80.3} & \underline{\textbf{79.1}} \\
    \midrule
    \textbf{Co-training}    & 64.0 & 79.6 & 77.1 \\
    \textbf{Self-training}  & 67.8 & \underline{80.2} & 76.9 \\
    \textbf{Union}          & 65.7 & \underline{\textbf{80.5}} & \underline{78.3} \\
    \textbf{Intersection}   & \underline{\textbf{69.9}} & 79.8 & 77.0 \\

    \end{tabular}
\end{table}

%



\textbf{Comparison with other methods.} We compare our methods with other state-of-the-art methods in \tablename\;\ref{tab:gcn_baseline}. The experimental setup is the same except that for every dataset, we sample 20 labels for each class, which corresponds to the total labeling rate of 3.6\% on CiteSeer, 5.1\% on Cora, and 0.3\% on PubMed. The results of other baselines are copied from \cite{kipf2016semi}. Our methods perform similarly as GCNs and outperform other baselines significantly. Although we did not directly compare with other baselines, we can see from \tablename\;\ref{tab:cora}, \ref{tab:citeseer} and \ref{tab:pubmed} that our methods with much fewer labels already outperform many baselines. For example, our method Union on Cora (\tablename\;\ref{tab:cora}) with 2\% labeling rate (54 labels) beats all other baselines with 140 labels (\tablename\;\ref{tab:gcn_baseline}).

\textbf{Influence of the Parameters.} A common parameter of our methods is the number of newly added labels. Adding too many labels will introduce noise, but with too few labels we cannot train a good GCN classifier. As described in the end of Section \ref{sec:Solution}, we can estimate the lower bound of the total number of labels $\eta$ needed to train a GCN by solving $(\hat{d})^\tau*\eta\thickapprox n$. We use $3\eta$ in our experiments. Actually, we found that $2\eta$, $3\eta$ and $4\eta$ perform similarly in the experiments. We follow \citeauthor{kipf2016semi} to set the number of convolutional layers as 2. We also observed in the experiments that 2-layer GCNs performed the best. When the number of convolutional layers grows, the classification accuracy decreases drastically, which is probably due to overfitting.

\textbf{Computational Cost.} For Co-Training, the overhead is the computational cost of the random walk model, which requires solving a sparse linear system. In our experiments, the time is negligible on Cora and CiteSeer as there are only a few thousand vertices. On PubMed, it takes less than 0.38 seconds in MatLab R2015b. As mentioned in Section \ref{sec:Solution}, the computation can be further speeded up using vertex-centric graph engines (Guo et al. 2017), so the scalability of our method is not an issue. For Self-Training, we only need to run a few epochs in addition to training a GCN. It converges fast as it builds on a pre-trained GCN. Hence, the running time of Self-Training is comparable to a GCN.

\section{Conclusions}\label{sec:conclusion}

Understanding deep neural networks is crucial for realizing their full potentials in real applications. This paper contributes to the understanding of the GCN model and its application in semi-supervised classification. Our analysis not only reveals the mechanisms and limitations of the GCN model, but also leads to new solutions overcoming its limits. In future work, we plan to develop new convolutional filters which are compatible with deep architectures, and exploit advanced deep learning techniques to improve the performance of GCNs for more graph-based applications.

\section*{Acknowledgments}

This research received support from the grant 1-ZVJJ funded by the Hong Kong Polytechnic University. The authors would like to thank the reviewers for their insightful comments and useful discussions.

\bibliographystyle{aaai}
\bibliography{parw}

\end{document}